\def\tsc#1{\csdef{#1}{\textsc{\lowercase{#1}}\xspace}}
\newcommand{\err}{\text{err}}
\newcommand{\Err}{\text{Err}}
\newcommand{\eps}{\varepsilon}
\newcommand{\Prob}{\mathbb{P}}
\newcommand{\confidence}{\gamma}
\newcommand{\credibility}{c}
\newcommand{\bagname}{\Sigma}
\newproof{proof}{Proof}
\newtheorem{proposition}{Proposition}
\newtheorem{remark}{Remark}
\begin{document}
\let\WriteBookmarks\relax
\def\floatpagepagefraction{1}
\def\textpagefraction{.001}

\shorttitle{}    

\shortauthors{}  

\title [mode = title]{Classification with Reject Option: Distribution-free Error Guarantees via Conformal Prediction}  



%

\author[1,3]{Johan {Hallberg Szabadv\'ary}}[orcid=0000-0001-7240-8919]

\cormark[1]

\fnmark[1]

\ead{johan.hallberg.szabadvary@ju.se}

\credit{}

\affiliation[1]{organization={Department of Computing, J\"onk\"oping University},
            country={Sweden}}
\affiliation[2]{organization={Department of Computer Science, Royal Holloway, University of London},
            country={UK}}
\affiliation[3]{organization={Department of Mathematics, Stockholm University},
            country={Sweden}}
\affiliation[4]{organization={M\"olnlycke Health Care AB, Gothenburg},
            country={Sweden}}

\author[1]{Tuwe L\"ofstr\"om}[orcid=0000-0003-0274-9026]
\ead{tuwe.lofstrom@ju.se}

\author[1]{Ulf Johansson}[orcid=0000-0003-0412-6199]
\ead{ulf.johansson@ju.se}

\author[1]{Cecilia Sönströd}[orcid=0009-0009-0404-2586]
\ead{cecilia.sonstrod@ju.se}

\author[2,4]{Ernst Ahlberg}[orcid=0000-0003-2050-9069]
\ead{ernst.ahlberg@molnlycke.com}

\author[1,2]{Lars Carlsson}[orcid=0000-0001-9491-4134]
\ead{lars.carlsson@ju.se}

\cortext[1]{Corresponding author}


\begin{abstract}
    Machine learning (ML) models always make a prediction, even when they are likely to be wrong. This causes problems in practical applications, as we do not know if we should trust a prediction. ML with reject option addresses this issue by abstaining from making a prediction if it is likely to be incorrect. In this work, we formalise the approach to ML with reject option in binary classification, deriving theoretical guarantees on the resulting error rate. This is achieved through conformal prediction (CP), which produce prediction sets with distribution-free validity guarantees. In binary classification, CP can output prediction sets containing exactly one, two or no labels. By accepting only the singleton predictions, we turn CP into a binary classifier with reject option. 

    Here, CP is formally put in the framework of predicting with reject option. We state and prove the resulting error rate, and give finite sample estimates.
    Numerical examples provide illustrations of derived error rate through several different conformal prediction settings, ranging from full conformal prediction to offline batch inductive conformal prediction. The former has a direct link to sharp validity guarantees, whereas the latter is more fuzzy in terms of validity guarantees but can be used in practice. Error-reject curves illustrate the trade-off between error rate and reject rate, and can serve to aid a user to set an acceptable error rate or reject rate in practice.
\end{abstract}

\begin{graphicalabstract}
\includegraphics[width=\textwidth]{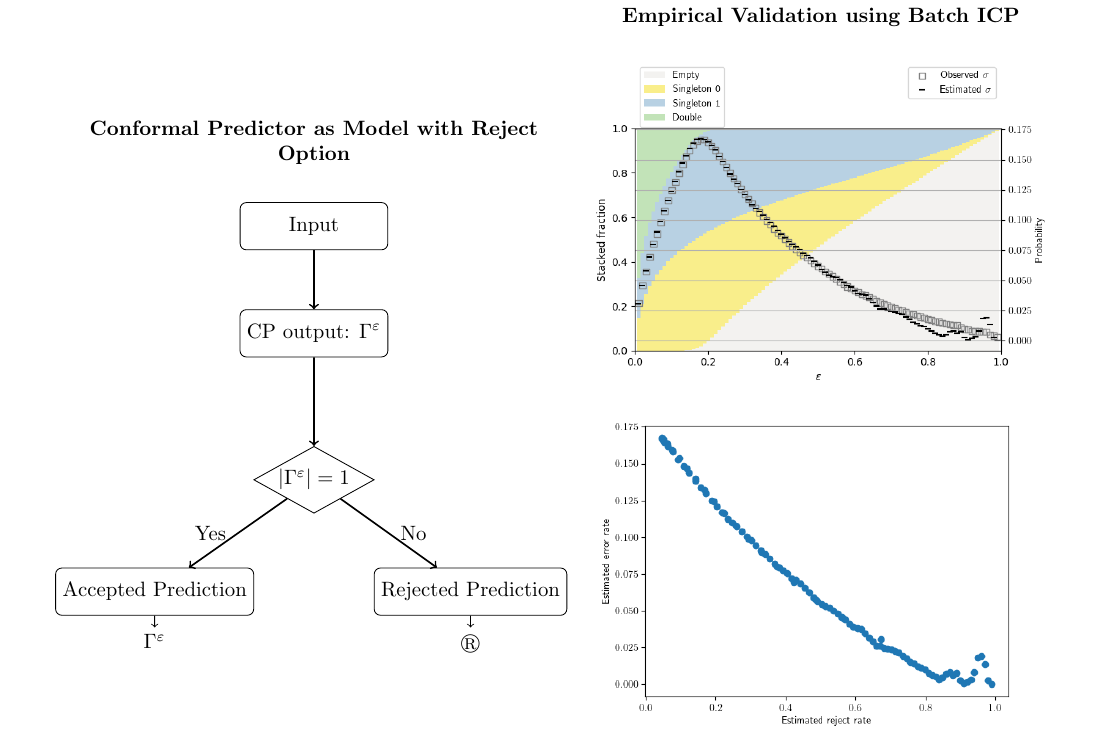}
\end{graphicalabstract}

\begin{highlights}
\item 
Proven Distribution-Free Error Rates for Singleton Predictions: Delivered theoretical guarantees on error rates for accepted singleton predictions, resolving inaccuracies found in prior works.

\item 
Refinement of Reject Option in Conformal Prediction: Clarified the relationship between error rate and reject rate using error-reject curves, offering a more accurate approach than existing models.

\item 
Practical Implementation Guidelines for Reject Option Models: Presented clear practical insights for implementing conformal predictors with reject options, providing empirical guidance for managing trade-offs in real-world applications.

\item 
Corrected Error Rate Analysis for Conformal Prediction with Reject Option: Provided a rigorous and corrected analysis of the error rate for binary classification with reject option, addressing weaknesses in previous studies.
\end{highlights}

\begin{keywords}
reject option \sep conformal prediction \sep binary classification \sep abstain prediction \sep refrain prediction \sep error-reject curve
\end{keywords}

\maketitle

\section{Introduction}\label{section:Introduction}

Traditional machine learning (ML) models always make a prediction, even when it is likely that the prediction will be incorrect. This is often undesirable, as in the case of decision support applications, where we want to be able to trust the information we are given. Machine learning with reject option aims to avoid making predictions when they are likely to be wrong, increasing the trustworthiness of a model. Most of this work trace back to~\citet{chow1957optimum} and~\citet{chow1970optimum}. In these papers a theoretical framework is established for optimum rejection rules and in particular the relation between rejection rate and the error rate for the accepted predictions. A rejection is achieved by allowing the model to abstain from giving predictions for some objects. The rejected objects can further be divided into two categories, ambiguity rejection and novelty rejection. For a more comprehensive description, see \citet{hendrickx2024machine}.

The main contribution of this work, is a method for ML with reject option with a provable distribution-free error rate. While optimal strategies for reject option has been studied in e.g. \citet{franc2023optimal}, \citet{cortes2016learning}, and \citet{hendrickx2024machine}, no strategy to the best of our knowledge provides a distribution-free error rate. We achieve this through Conformal prediction (CP) \citep{alrw2}, a method to generate prediction sets with guarantees on the error rate, although defined for set predictions. The key property of conformal predictors is that their predictions are valid, meaning, in the strongest formulation, that errors, i.e. the true label is not included in the prediction set, are independent, and happen with a user specified probability. In recent years, work has been done suggesting that validity is supported for singleton predictions, {i.e.} just one label is within the prediction set, see {e.g.} \citet{linusson2016reliable}, \citet{linusson18} and \citet{johansson23}. By rejecting all but the singleton predictions, one can view conformal prediction as a case of ML with reject option. For another example of this, see e.g. \citet{bortolussi2019conformal}. Common for these works is that they each present different procedures for creating singleton predictions with an associated error rate and empirical results have been presented to support the different procedures, respectively. However, none of the above works use the correct singleton error rates, and different formulas are used in different papers, which leads to confusion about what is theoretically justifiable. The formula appearing in \citet{linusson2016reliable} and \citet{bortolussi2019conformal} is close to being correct, but since both works consider CP in the inductive, offline setting, the overall error rate must be adjusted accordingly, which they fail to account for, making their singleton error rate approximately correct only for large calibration sets. We present the relevant theory in Section \ref{ICP theory}, and the correct singleton error rate in Section \ref{sec:ICP singleton}, which is of the PAC type, including two parameters.

This work resolves the confusion by providing the theoretical singleton error rates in the online setting (where the formula appearing in \citet{linusson2016reliable} actually holds) as well as the offline setting, with the relevant modifications to account for training conditional validity. 
Moreover, we formally describe the model with reject option that results from using CP in this mode, aligning the notation with that of \citet{hendrickx2024machine}. A self-contained introduction to conformal prediction as well as statements of the error rates in different scenarios are given in the appendices.

The rest of the paper is structured as follows. We give a brief introduction to classification with regret option in Section \ref{sec:rejectIntro}, followed by an informal introduction to conformal prediction in Section \ref{section:A Brief Introduction to Conformal Prediction}. Section \ref{section:Conformal Predictors as Rejectors} makes precise the connection between ML with reject option and CP, defining formally the CP rejector, which abstains from making predictions whenever the prediction set is not a singleton. We then move on to prove the error rate of CP rejectors in Section \ref{section:Error Rate}. Section \ref{section:Numerical Experiments} is devoted to numerical illustrations of the singleton error rates. Observations and discussions conclude, and the interested reader will find more details on conformal prediction and singleton validity in the appendices.

\section{Classification with reject option}\label{sec:rejectIntro}

In ML with reject option, the label space is extended to include a new label $\circledR$, which indicates a rejection. Formally, a model with rejection $m:\boldsymbol{X}\to \boldsymbol{Y}\cup\{\circledR\}$ is represented by a pair $(h,r)$ where $h:\boldsymbol{X}\to \boldsymbol{Y}$ is the predictor and $r:\mathcal{R}\to\mathbb{R}$ is the rejector. Here $\mathcal{R}$ may be for example $\boldsymbol{X}$ so that $r$ takes examples as input, the unit interval so that $r$ takes a confidence level as input, or even both. When the rejector determines that a prediction is likely to be wrong, $m$ outputs $\circledR$. Otherwise, the predictors output is accepted:
\begin{equation}
    \label{eq:defaultModelWithReject}
    m(x) 
    = 
    \begin{cases}
    $\circledR$ & \text{if the prediction is rejected}\\
    $h(x)$ & \text{if the prediction is accepted.}
    \end{cases}
\end{equation}
The rejector can use any available information as input, such as estimated probabilities, confidence levels, the object to be classified or any combination of these. Often, a reject threshold $t$ is specified, and a rejection depends on the value of $r$.
A concrete example, similar to the one described in \citet{HANCZAR2019106984}, could be a situation with $d$ classes, so that $\boldsymbol{Y}=\{c_1,\dots,c_d\}$, and $h(x) = \text{argmax}_i\{\pi_i(x)\}$, where $\pi_i$ is a probability estimate for the class $c_i$. We take $r(x) = \max_i\{\pi_i(x)\}$, and the corresponding model with reject option is then
\begin{equation}
    \label{eq:exampleModelWithReject}
    m(x) 
    = 
    \begin{cases}
    $\circledR$ & \text{if $r(x) < t$}\\
    $h(x)$ & \text{otherwise}
    \end{cases}
\end{equation}
The motivation for model \eqref{eq:exampleModelWithReject} is that if the largest probability estimate is large enough, it should make us trust the output more, and the error rate should decrease. However, the price we pay, is that if $t$ is large, the model will reject more objects. This means that there is a tradeoff between the reject rate and the error rate that has to be balanced. A common visualisation of the performance of a model with reject option, is to plot the error-reject curve. This, and some other visualisations, is described in detail in \citet{HANCZAR2019106984}. 

As discussed in \citet{hendrickx2024machine}, there are, at a high level, four principal reasons why a model exhibits large uncertainty in its predictions:
\begin{enumerate}
    \item 
    There can be objects $x\in\boldsymbol{X}$ that are associated with multiple labels in $\boldsymbol{Y}$. This can arise in situations where classes overlap.
    \item 
    Some examples in the training data are incorrect, e.g. labelling errors.
    \item 
    A shift in distribution between the training set and deployment.
    \item 
    An object $x\in\boldsymbol{X}$ could simply be inherently rare, thus making it difficult to classify.
\end{enumerate}
Based on these intuitions, rejections can be separated into two distinct kinds.

\textbf{Ambiguity rejection} happens when the object $x$ falls within a region where the label $y$ is ambiguous (corresponding to 1 and 2 above). Commonly, this means that $x$ falls close to the decision boundary.

\textbf{Novelty rejection} occurs when $x$ falls in a region where there was little or no training data (corresponding to 3 and 4 above). 

Classification with reject option goes back to \citet{chow1957optimum}, and it is well known that the optimal rejector is a version of Bayes rule: it rejects samples whose maximum posterior is below a threshold $1-\lambda_r$, which is known as Chows rule \citep{PILLAI20132256}. With modern deployment of ML systems in high-stakes, real-world applications, reject option has gained renewed attention \citep{shah2020online} and \citep{lei2020new}. 
In safety-critical applications, such as medical diagnoses, a classifier could handle the easy cases, thus freeing up human resources, while deferring the difficult ones to a human by rejecting them. 
Interesting developments include learning reject rules \citep{zhang2023survey} and partial rejection \citep{Karlsson2024} and \citep{LECAPITAINE2014843}, the latter  being closely connected to conformal prediction, which we introduce in Section \ref{section:A Brief Introduction to Conformal Prediction}.

\section{A Brief Introduction to Conformal Prediction}\label{section:A Brief Introduction to Conformal Prediction}

Conformal prediction (CP), developed by Vovk, Gammerman and Shafer \citep{alrw2}, is a method to generate prediction sets for essentially any predictive model, with a user specified confidence level $1-\eps$. These prediction sets are valid not just in the asymptotic sense that the error frequency in the long run is $\eps$, but also in the sense that for a smoothed CP, the error probability is exactly $\eps$, independently for each trial. 

Since its introduction in the late 90s, there has been an increasing interest in CP as a distribution-free method for uncertainty quantification, with several special issues of journals devoted to the topic, including Pattern Recognition \citep{GAMMERMAN2022108561} and Neurocomputing \citep{GAMMERMAN2020264}. The motivation is similar to that of reject option. With increasing deployment of ML in high-stakes application, uncertainty quantification can help guide decision-making, and as we shall see, CP can be used as a reject option. There are several excellent, shorter introductions to the field of CP, which include \citet{TOCCACELI2022108507} and \citet{shafer2008tutorial}. While the present work will only consider the basic form of CP, that produces valid prediction sets, the framework also includes probabilistic classification (Venn predictors) \citep{vovk2014venn}, and regression (Conformal Predictive Systems) \citep{VOVK2022108536}. 

This section introduces what is known as full CP, which is an online procedure. We consider only the case of classification, although CP can be used for regression as well. Throughout this work, we follow the notation and conventions used in \citet{alrw2}. Assume that Reality outputs a sequence of examples
\begin{equation}
    (x_1,y_1),(x_2,y_2),\dots
\end{equation}
where $x_i\in \boldsymbol{X}$ is called an object and $y_i\in \boldsymbol{Y}$ is its label. For convenience, we write $z_i = (x_i,y_i)$ which is an element of the example space $\boldsymbol{Z}:= \boldsymbol{X}\times \boldsymbol{Y}$. We assume that the examples are drawn from a distribution $P$ on $\boldsymbol{Z}^{\infty}$, and our standard assumption is that the distribution $P$ is exchangeable. Intuitively, exchangeability means that any permutation of the examples is equally probable. The exchangeability assumption is similar to the standard i.i.d. assumption, but is slightly weaker. 

\subsection*{Nonconformity Measures}
In order to construct a conformal predictor, one needs to define a nonconformity measure $A$. Informally, this is a function that measures how “strange” a new example is, given what we have seen before. Examples of nonconformity measures include the residuals of an ML model trained on all the past examples. 
Given a sequence of examples $z_1,\dots,z_n$, we form the bag (or multiset) $\bagname := \lbag z_1,\dots,z_n\rbag$. The bag $\bagname$ can be seen as a summary of our examples in the sense that it disregards the order in which they were observed. The number
\begin{equation}
    \alpha_i := A(\Lbag z_1,\dots,z_n\Rbag, z_i)
\end{equation}
is called the nonconformity score of the new example $z_i$. 

\subsection*{p-values}

The nonconformity score $\alpha_i$ by itself does not tell us how unusual $z_i$ is with respect to the other examples. For this, we need to compare it to the other nonconformity scores. A convenient comparison is to compute the fraction
\begin{equation}
    \frac{|\{j=1,\dots,n:\alpha_j\geq\alpha_i\}|}{n},
\end{equation}
i.e. the number of examples that are less conforming than $z_i$ divided by the total number of examples. This fraction, which is a rational number between $1/n$ and 1 is called the p-value of example $z_i$. A small p-value indicates that $z_i$ is very nonconforming, and a large p-value indicates that it is typical. To get the strongest validity guarantees, we need to handle the cases when $\alpha_j=\alpha_i$ more carefully by introducing some randomisation. This leads to the notion of smoothed conformal predictors. For a description of this, see \ref{section:More Details on Conformal Predictions}.

The conformal predictor determined by a nonconformity measure $A$ is 
\begin{equation}
    \Gamma^{\eps}(z_1,\dots,z_{n-1}, x_n):=\Gamma^{\eps}_n :=\left\{y\in \boldsymbol{Y}:\frac{|\{i=1,\dots,n:\alpha_i\geq\alpha_n\}|}{n} > \eps \right\}
\end{equation}
where
\begin{equation}
    \begin{aligned}
        \alpha_i &:= A(\bagname,(x_i,y_i)), & i=1,\dots,n-1\\
        \alpha_n &:= A(\bagname,(x_n,y)), & y\in \boldsymbol{Y}\\
        \bagname &= \Lbag (x_1,y_1),\dots,(x_{n-1},y_{n-1}), (x_n,y)\Rbag.
    \end{aligned}
\end{equation}
The procedure producing the prediction set $\Gamma^{\eps}$ may be viewed as for each possible label $y\in \boldsymbol{Y}$, testing the hypothesis that $y_n=y$. This is done by tentatively setting $y_n=y$ and computing the resulting p-value. The prediction set consists of all labels which result in a p-value larger than a specified significance level $\eps$.

It is important to note that the output of a conformal predictor is a subset of the label space, rather than a single point. As hinted at before, this can be viewed as a partial rejection \citep{Karlsson2024}. Predicting $\Gamma_n^{\eps}$ is the same as rejecting $\boldsymbol{Y}\backslash\Gamma_n^{\eps}$. The notion of error is therefore somewhat unusual compared with classical ML methods. Let $\Gamma$ be a confidence predictor, processing the infinite data sequence
\begin{equation}
    \label{eg:omegaIntro}
    \omega = (x_1,y_1,\dots)
\end{equation}
The event that $\Gamma$ makes an error at trial $n$, at the significance level $\eps$, i.e., $y_n\not\in\Gamma^{\eps}_n$, can be represented by a number 
\begin{equation}
    \err_n^{\eps}(\Gamma, \omega) :=
    \begin{cases}
        1 & \text{if $y_n\not\in \Gamma_n^{\eps}$} \\
        0 & \text{otherwise.}
    \end{cases}
\end{equation}
The total number of errors committed by $\Gamma$ after $n$ trials is then 
\begin{equation}
\Err^{\epsilon}_n(\Gamma,\omega):=\sum^n_{i=1}\text{err}^{\epsilon}_i(\Gamma,\omega).
\end{equation}
The number $\err_n^{\eps}(\Gamma, \omega)$ is the realisation of a random variable $\err_n^{\eps}(\Gamma)$.
We say that a confidence predictor $\Gamma$ is exactly valid if, for each $\eps$, 
\begin{equation}
    \err_1^{\eps}(\Gamma), \err_2^{\eps}(\Gamma),\dots
\end{equation}
is a sequence of independent Bernoulli random variables with parameter $\eps$. In words, the event of making an error is like getting heads when tossing a biased coin where the probability of getting heads is always $\eps$.

The main result regarding conformal predictors is the following proposition, whose proof can be found in \citep{alrw2}.

\begin{proposition}
    \label{prop:smoothedCPValidity}
    Any smoothed conformal predictor is exactly valid.
\end{proposition}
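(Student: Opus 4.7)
The plan is to reduce the statement to a claim about smoothed p-values, and then exploit exchangeability of the data via a decreasing filtration. Concretely, at each trial $n$ an error occurs exactly when the smoothed p-value $p_n$ evaluated at the \emph{true} label $y_n$ satisfies $p_n \leq \eps$, so it suffices to prove that $p_1, p_2, \ldots$ is an i.i.d.\ Uniform$[0,1]$ sequence; the conclusion that the $\err_n^{\eps}$ are i.i.d.\ Bernoulli$(\eps)$ then follows immediately.

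For marginal uniformity, I would introduce the decreasing family of $\sigma$-algebras
\[
\mathcal{F}_n := \sigma\bigl(\Lbag z_1,\dots,z_n\Rbag,\, z_{n+1}, z_{n+2}, \dots\bigr),
\]
together with independent smoothing variables $\tau_1,\tau_2,\dots$ uniform on $[0,1]$. Exchangeability says that, conditional on $\mathcal{F}_n$, the first $n$ examples are distributed uniformly over all orderings of the bag $\Lbag z_1,\dots,z_n\Rbag$. Since the nonconformity measure $A$ depends on its first argument only through the bag, the multiset of scores $(\alpha_1,\dots,\alpha_n)$ is $\mathcal{F}_n$-measurable, while the position of $\alpha_n$ within it is uniform on $\{1,\dots,n\}$. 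The smoothing by $\tau_n$ is engineered precisely to break ties so that this discrete rank is promoted to a continuous Uniform$[0,1]$ variable, giving $p_n \mid \mathcal{F}_n \sim$ Uniform$[0,1]$.

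The main obstacle, and the crux of the independence claim, is to align this conditional uniformity across all trials. The key observation is that the later p-values $p_{n+1}, p_{n+2}, \dots$ are $\mathcal{F}_n$-measurable once the smoothing variables $\tau_{n+1}, \tau_{n+2}, \dots$ are fixed: for any $m > n$, computing $p_m$ only requires the bag $\Lbag z_1,\dots,z_m\Rbag$, which is built from the $\mathcal{F}_n$-known bag of size $n$ by adjoining the $\mathcal{F}_n$-known tail $z_{n+1},\dots,z_m$, together with the test point $z_m$, also measurable in $\mathcal{F}_n$. Hence, conditional on $\mathcal{F}_n$ and the full $\tau$-sequence, $p_n$ is Uniform$[0,1]$ while $(p_{n+1},p_{n+2},\dots)$ are constants, which gives independence of $p_n$ from all later p-values. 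Applying this at every $n$ (and using independence of the $\tau$'s) yields full mutual independence, completing the proof.
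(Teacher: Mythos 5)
The paper does not actually prove this proposition itself --- it defers to \citet{alrw2} --- and your argument is in essence the standard proof from that reference: reduce exact validity to the claim that the smoothed p-values of the true labels are i.i.d.\ uniform on $[0,1]$ (an error at trial $n$ being exactly the event $p_n\leq\eps$), obtain conditional uniformity of $p_n$ given the decreasing $\sigma$-algebra $\mathcal{F}_n$ generated by the bag $\lbag z_1,\dots,z_n\rbag$ and the tail, and derive independence from the fact that all later p-values are $\mathcal{F}_n$-measurable. The structure is sound and matches what the paper relies on (it invokes the same uniformity fact, Theorem 11.2 of \citet{alrw2}, in the proof of Proposition \ref{prop:exactSingletonErrorOnline}). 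One small slip to fix: you condition on ``the full $\tau$-sequence,'' but given $\tau_n$ and $\mathcal{F}_n$ the value $p_n$ is supported on at most $n$ points and is \emph{not} uniform; you must exclude $\tau_n$ from the conditioning and condition only on $(\tau_m)_{m>n}$, which still renders $p_{n+1},p_{n+2},\dots$ constant (they do not involve $\tau_n$) while preserving the randomisation that makes $p_n$ exactly uniform. With that correction the iteration over $n$ yields the claimed mutual independence.
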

In other words, the prediction sets output by a smoothed CP will contain the true label with probability $1-\eps$ independently for each trial.
A self-contained introduction to CP in different settings, including more details on exchangeability, can be found in \ref{section:More Details on Conformal Predictions}. 
For a full discussion of conformal predictors, see \citet{shafer2008tutorial} and \cite{alrw2}. 

\subsubsection*{Confidence and Credibility}

The confidence of an object $x_i$ is denoted $1 - \confidence_{x_i}$, and is defined as one minus the smallest $\eps$ such that $\Gamma^{\eps}_i$ is a singleton set, i.e.
\begin{equation}
    \label{eq:pintwiseConfidence}
    1 - \confidence_{x_i} := \sup\{1-\eps:|\Gamma_i^{\eps}|=1\}.
\end{equation}
Similarly the credibility, $\credibility_{x_i}$ of $x_i$ is the smallest $\eps$ for which $\Gamma^{\eps}$ is empty,
\begin{equation}
    \label{eq:pointwiseCredibility}
    \credibility_{x_i} := \inf\{\eps:\Gamma_i^{\eps} = \varnothing\}.
\end{equation}
The confidence is equal to the second largest p-value associated with $x_i$, and the credibility is equal to the largest p-value.
    
\section{Conformal Predictors as models with reject option}\label{section:Conformal Predictors as Rejectors}
We have already seen that CP prediction sets may be viewed as a partial rejection (everything not included in the prediction set is rejected). In binary classification, the prediction sets output by a CP can contain one, two or no labels, but we really only make a clear prediction when they contain exactly one label. The other two situations contain little useful information; in terms of partial rejection, empty sets reject everything, and a two-label prediction simply states the tautology that the true label is a member of the label set. We may thus regard both empty and double predictions as a rejection. But the cause for rejection is different. In \citet{hendrickx2024machine}, the authors distinguish between novelty and ambiguity rejection, which we discuss in Section \ref{sec:rejectIntro}. An empty prediction set is rejected because we are not sufficiently confident in any label. This is novelty rejection. A double prediction set is rejected because we are too confident in predicting both labels. This is ambiguity rejection. We are unable to confidently distinguish between labels. So conformal prediction may be seen as applying two rejectors. Formally we augment the label space $\boldsymbol{Y}$ with two new symbols, $\circledR_{\varnothing}$, indicating a novelty rejection, and $\circledR_{\mathcal{D}}$, indicating an ambiguity rejection. Our model with reject option is then the CP rejector
\begin{equation}
    \label{eq:CPModelWithReject}
    m(x, \eps) 
    = 
    \begin{cases}
        \circledR_{\varnothing} & \text{if $|\Gamma^{\eps}(x)|=0$}\\
        \Gamma^{\eps}(x) & \text{if $|\Gamma^{\eps}(x)|=1$}\\
        \circledR_{\mathcal{D}} & \text{if $|\Gamma^{\eps}(x)|=2$}
    \end{cases}
\end{equation}
with a slight abuse of notation (we actually predict the single element in $\Gamma^{\eps}$ rather than the set). 
In this formulation, $\eps\in(0,1)$ is a reject parameter that determines which objects are rejected. In particular, we accept a prediction if and only if $\eps\in I_x:=[\confidence_x, \credibility_x)$ (see \eqref{eq:pintwiseConfidence} and \eqref{eq:pointwiseCredibility}). 

Conformal prediction provides validity guarantees in the sense that the true label is contained in the prediction set with probability $1-\eps$, but in a reject scenario we want to be more precise. Accepting only objects with sufficiently high confidence and credibility, i.e. the singleton predictions, we want to be able to state the error rate of the accepted predictions.
The next section deals with the question of error rate for the predictions we actually make in this scenario; the singleton prediction sets.

\section{Error Rate of the Accepted Predictions}\label{section:Error Rate}

This section discusses the error rate of the accepted predictions. What is the probability of $\err_i^{\eps}(\Gamma, \omega) = 1$ if we have observed $|\Gamma_i^{\eps}|=1$? The rationale behind this question is that, in terms of CP, a prediction set that covers all of $\boldsymbol{Y}$ is always correct, and an empty prediction set is always an error. In binary classification, the only remaining possibility is when the prediction set is a singleton, and thus it makes sense to ask for the probability that the singleton predictions are in error.
In essence, what we need to compute is the conditional probability $\Prob(\err_i^{\eps}~\big|~ |\Gamma_i^{\eps}|=1)$. 

\subsection{Online Smoothed Conformal Prediction}

Consider the cardinality of a prediction set, $|\Gamma_i^{\eps}|$, as a random variable that takes values in $\{0,1,2\}$. Moreover, let $\err$ be the event $\err_i^{\eps}(\Gamma, \omega)=1$, $E$ the event $|\Gamma_i^{\eps}|=0$, $S$ the event $|\Gamma_i^{\eps}|=1$ and $D$ the event $|\Gamma_i^{\eps}|=2$. Then $\{E,S,D\}$ is a partition of the sample space of $|\Gamma_i^{\eps}|$. 
By Proposition \ref{prop:smoothedCPValidity}, the probability of making an error (in the CP sense) is $\eps$, and by the law of total probability,
\begin{equation}
    \label{eq:totalProbability}
    \eps = \Prob(\err) =
    \Prob(\err~|~E)\Prob(E) + \Prob(\err~|~S)\Prob(S) + \Prob(\err~|~D)\Prob(D) = \Prob(E) + \Prob(\err~|~S)\Prob(S)
\end{equation}
since the error probabilities conditioned on $E$ and $D$ are known to be 1 and 0 respectively. Our main result follows immediately.

\begin{proposition}
    \label{prop:exactSingletonErrorOnline}
    Let $\Gamma$ be a smoothed conformal predictor in the online setting that processes the data sequence \eqref{eg:omega}. The probability of a singleton prediction making an error is
    \begin{equation}
        \label{eq:sigma}
        \sigma := \frac{\eps - \Prob(E)}{\Prob(S)}.
    \end{equation}
\end{proposition}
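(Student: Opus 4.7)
The plan is to derive $\sigma$ via a single application of the law of total probability to the unconditional error probability. Since $\{E, S, D\}$ partitions the sample space of $|\Gamma_i^{\eps}|$, Proposition \ref{prop:smoothedCPValidity} gives $\Prob(\err) = \eps$, and total probability yields
\begin{equation*}
\eps \;=\; \Prob(\err \mid E)\Prob(E) \;+\; \Prob(\err \mid S)\Prob(S) \;+\; \Prob(\err \mid D)\Prob(D).
\end{equation*}
All that remains is to identify $\Prob(\err \mid E)$ and $\Prob(\err \mid D)$, after which solving for $\Prob(\err \mid S)$ is immediate.

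For these two boundary conditional probabilities I would argue directly from the definition $\err = \{y_i \notin \Gamma_i^{\eps}\}$. If $|\Gamma_i^{\eps}| = 0$, then $y_i \notin \Gamma_i^{\eps}$ holds automatically, so $\Prob(\err \mid E) = 1$. If $|\Gamma_i^{\eps}| = 2$, then in binary classification $\Gamma_i^{\eps} = \boldsymbol{Y}$, so $y_i \in \Gamma_i^{\eps}$ with certainty and $\Prob(\err \mid D) = 0$. Substituting collapses the total-probability identity to $\eps = \Prob(E) + \Prob(\err \mid S)\Prob(S)$, and solving for the singleton conditional error probability gives the asserted $\sigma = (\eps - \Prob(E))/\Prob(S)$.

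There is essentially no obstacle here: once the right partition is chosen, the proof is a one-line rearrangement, and the key conceptual ingredient is the observation that binary classification forces the two extremal conditional error probabilities to be $0$ and $1$. The only point worth flagging is a minor well-definedness caveat, namely that $\Prob(S) > 0$ is required for $\sigma$ to make sense; in degenerate regimes where the predictor never issues a singleton (for instance at $\eps = 0$, where $\Gamma_i^{\eps} = \boldsymbol{Y}$ always), the statement is vacuous. Everything else is a direct consequence of exact validity from Proposition \ref{prop:smoothedCPValidity}.
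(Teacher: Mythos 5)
Your proposal is correct and follows essentially the same route as the paper: the law of total probability over the partition $\{E,S,D\}$, the observations that $\Prob(\err\mid E)=1$ and $\Prob(\err\mid D)=0$ in binary classification, and a one-line rearrangement. The only difference is in the side remarks — you flag that $\Prob(S)>0$ is needed for $\sigma$ to be well defined, while the paper instead verifies that $\sigma\in[0,1]$ by invoking the uniform distribution of the correct label's p-value to get $\Prob(E)\leq\eps$ and $\Prob(D)\leq 1-\eps$; both are reasonable complements to the same core argument.
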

\begin{proof}
    Rearranging \eqref{eq:totalProbability} leads to \eqref{eq:sigma}. The condition $\Prob(E)\leq\eps\leq \Prob(E) + \Prob(S) = 1 - \Prob(D)$ ensures that $\sigma\in[0,1]$. Note however that the p-values of the correct label are uniformly distributed on $[0,1]$ (Theorem 11.2 \citep{alrw2}). Hence $\Prob(E) \leq \eps$ and $\Prob(D) \leq 1 - \eps$ for all $\eps\in(0,1)$, ensuring that both bounds are always satisfied.
\end{proof}
While this proposition tells us exactly what the singleton error probability is, it is not of much practical use since we do not know the probabilities of making empty and singleton predictions. Note, however, that by the law of large numbers,
\begin{align*}
    \lim_{n\to\infty}\frac{e}{n} = \Prob(E)\\
    \lim_{n\to\infty}\frac{s}{n} = \Prob(S)
\end{align*}
where $e$ is the number of empty predictions and $s$ is the number of singleton predictions. It follows that 
\begin{equation}
    \label{sigmaAsymptotic}
    \sigma = \lim_{n\to\infty}\frac{n\eps - e}{s}.
\end{equation}
In the online setting, this does not help much since Reality reveals the true label before we move on to make a new prediction. Nevertheless, if we insist on being ignorant, we may look away as Reality outputs $y_i$. After making $n$ such user-blind predictions, $(n\eps - e)/s$ is an approximation of $\sigma$. 

In light of Proposition \ref{prop:exactSingletonErrorOnline} we can now state the error rate and rejection rate of our rejector \eqref{eq:CPModelWithReject}. The error rate for a fixed $\eps$ is $\sigma$, and the reject rate is $1-\Prob(S)$.

\begin{remark}
    We called $\eps$ our reject parameter in \eqref{eq:CPModelWithReject} which is not consistent with the use of the term in \citep{chow1970optimum}. It does however fill the essential function of a reject parameter since the reject rate and error rate are functions of $\eps$ which justifies the use of the term here. 
\end{remark}

\section{Numerical Examples}\label{section:Numerical Experiments}

We will now turn our attention to a few examples, illustrating $\sigma$ for the cases covered earlier. The cases that we will look at are transductive conformal prediction in an online setting, inductive conformal prediction in an offline setting and inductive conformal prediction in an offline batch setting \textcolor{black}{(see Appendix \ref{ICP theory} for a description of inductive conformal prediction).}

\textcolor{black}{Transductive CP in the online setting can be quite computationally complex. In the batch mode, inductive CP simply involves training a machine learning model, computing the nonconformity scores of a calibration set, and sorting them. Assuming the calibration set is of size $m$, this can be done in $\mathcal{O}(m\log{m})$ once the base model has been trained.}

For the purpose of these illustrations, three different datasets will be used. More details are provided for the datasets in Table~\ref{tab:datasets}.
\begin{table}[h!]
\centering
\begin{tabular}{l|c|c}
Name & Nr.~examples & Nr.~features \\\hline
\texttt{qsar-biodeg} & 1055 & 41\\
\texttt{spambase} & 4601 & 57\\
\texttt{California-Housing-Classification} & 20640 & 8\\
\end{tabular}
\caption{\label{tab:datasets}Some details on the datasets used.}
\end{table}

\subsection{Full Conformal Prediction}
We will start by explaining our interpretation of \textit{Full Conformal Prediction} (FCP). In our interpretation of FCP, we will use a training set to predict one object at a time, taking into account how well the different possible labels for the object conforms to the training set. When a batch of objects is to be predicted, we will not learn the true labels of the predicted object and update our training set. In this setup, the order of the objects to be predicted is not important and will not affect the prediction sets of each individual object. We may as well describe an FCP as a single object Transductive Conformal Predictor (see Section 4.5 \citet{alrw2}). 

Here, we will use a Mondrian one nearest neighbour approach. The non-conformity measure is defined as
\begin{equation}
    A((x_1,y_1),\dots,(x_{n-1},y_{n-1}),(x_n,y)):=\min_{i,j \in I_y, i\ne j} d(x_i,x_j),
\end{equation}
where $d(\cdot,\cdot)$ is the Euclidean distance between two objects and $I_y = \{i=1,\dots,n : y_i = y\}$. An example is considered conforming if the distance between its object to the nearest object with the same label is small. The data set used is \texttt{qsar-biodeg} and the initial training set size is 100, and we predict 200 objects. If we run this 100 times, reshuffling the data before each run, we get the results, with statistical variations, shown in Figure~\ref{fig:tcp biodeg}. \textcolor{black}{Note that the curve in Figure \ref{fig:reject-tcp} indicates two different error rates for some reject rates. This happens because some $\varepsilon$ yield the same number of singleton predictions in this case. In this situation, it is prefereable to choose the $\varepsilon$ that leads to the lower error rate, which will always be the smallest one (the curve is parameterised by $\varepsilon$, and goes form the lower right corner to the upper right corner). The achievable reject rates depend on the specific conformal predictor, in particular on the distribution of p-values in the set $[0,1]\times[0,1]$. The highest proportion of singleton predictions for the one nearest neighbour CP on this dataset is approximately 0.4, which means that the lowest achievable reject rate (indicated in Figure \ref{fig:reject-tcp}) is approximately 0.6. The stacked fraction in Figure \ref{fig:sigma-tcp} indicate the proportion of empty, double and singleton predictions for each class that result form a fixed $\varepsilon$.} We also remark that the application of an FCP probably is not of practical interest, but it follows our theoretical reasoning for the singleton error rate, $\sigma$. \textcolor{black}{Computing the prediction set for a test object $x_n$ involves computing all distances ($\mathcal{O}(n^2)$) and sorting nonconformity scores ($\mathcal{O}(n\log{n})$). Note that this is the complexity for one prediction.}

\begin{figure}[h!]
\centering
\begin{subfigure}[b]{1.0\textwidth}
\centering
\includegraphics[width=0.8\linewidth]{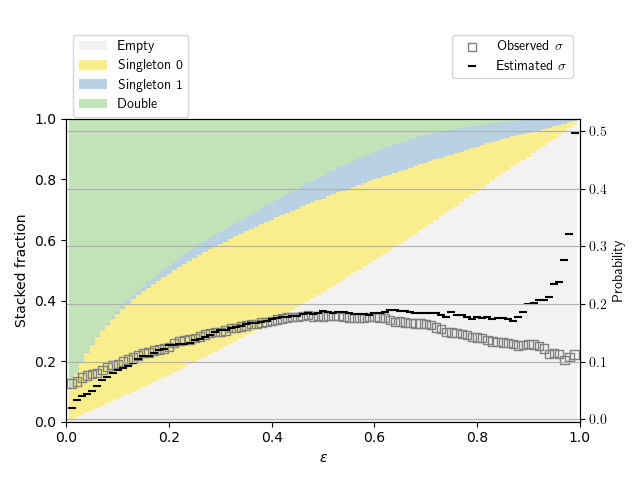}
\caption{This shows the fraction of prediction set sizes and $\sigma$ as a function of $\varepsilon$}
\label{fig:sigma-tcp}
\end{subfigure}
\vskip\baselineskip 
\begin{subfigure}[b]{1.0\textwidth}
\centering
\includegraphics[width=0.8\linewidth]{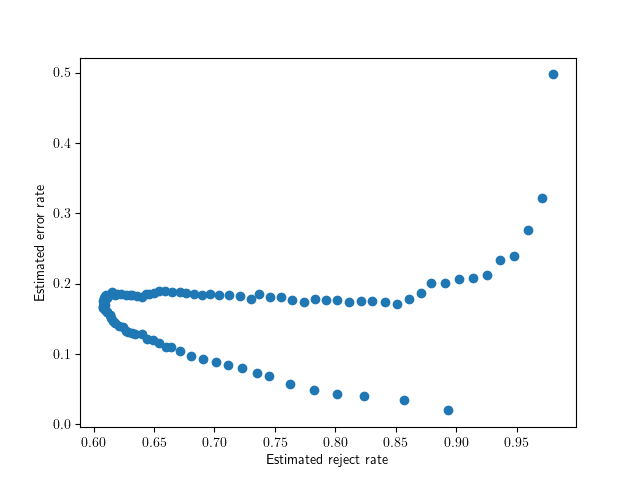}
\caption{\textcolor{black}{Error rate as a function of reject rate, parameterised by $\varepsilon$. The curve goes from the lower right corner to the upper right corner, with $\varepsilon$ increasing in the direction indicated by the arrow.}}
\label{fig:reject-tcp}
\end{subfigure}
\caption{Singleton error rates and error reject curve in the FCP case. The data set used is \texttt{qsar-biodeg}}
\label{fig:tcp biodeg}
\end{figure}

\subsection{Offline Inductive Conformal Prediction}

In this case, we are using the software package \texttt{Crepes}~\citep{crepes} to do the offline \textit{Inductive Conformal Prediction}, offline (ICP) (see \textcolor{black}{Appendix} \ref{section:More Details on Conformal Predictions}) using \texttt{RandomForest} from~\citet{scikit-learn} with the default \texttt{Crepes} nonconformity measure, and using the data set \texttt{spambase}. We are doing the experiments with proper training set size of 500, calibration set size of 500, and test set size of 100. This is done 1000 times, reshuffling the data set before selecting the different sets. The results are illustrated in Figure~\ref{fig:ICP spambase}.
\begin{figure}[h!]
\centering
\begin{subfigure}[b]{1.0\textwidth}
\centering
\includegraphics[width=0.8\linewidth]{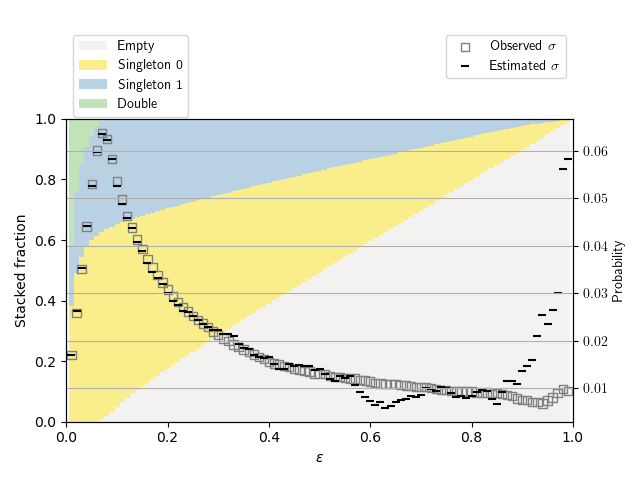}
\caption{This shows the fraction of prediction set sizes and $\sigma$ as a function of $\varepsilon$}
\label{fig:sigma-icp}
\end{subfigure}
\vskip\baselineskip 
\begin{subfigure}[b]{1.0\textwidth}
\centering
\includegraphics[width=0.8\linewidth]{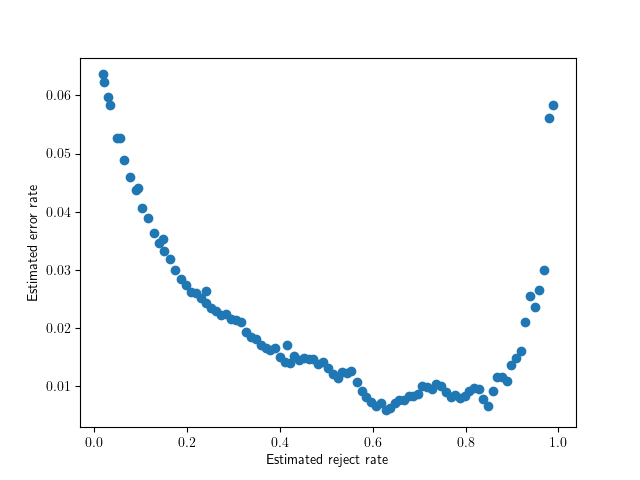}
\caption{This shows the error rate as a function of reject rate, parameterised by $\varepsilon$}
\label{fig:reject-icp}
\end{subfigure}
\caption{Singleton error rates and error reject curve in the offline ICP case for \texttt{spambase}.}
\label{fig:ICP spambase}
\end{figure}

\subsection{Batch Offline Inductive Conformal Prediction}
Here we use exactly the same setup as in the previous section, except that for the objects predicted we will learn their respective labels, a batch at a time, and add them to the training set. This will be repeated for ten times. The proper training set size will initially be 200 examples, but will grow with 100 for each batch, and  the calibration set will be fixed at a size of 300 examples. The data set used is \\ \texttt{California-Housing-Classification}. The division between proper training examples and calibration examples will be done randomly each time a new offline ICP model is trained. The results are illustrated in Figure~\ref{fig:batchICP CaliforniaHousing}.
\begin{figure}[h!]
\centering
\begin{subfigure}[b]{1.0\textwidth}
\centering
\includegraphics[width=0.8\linewidth]{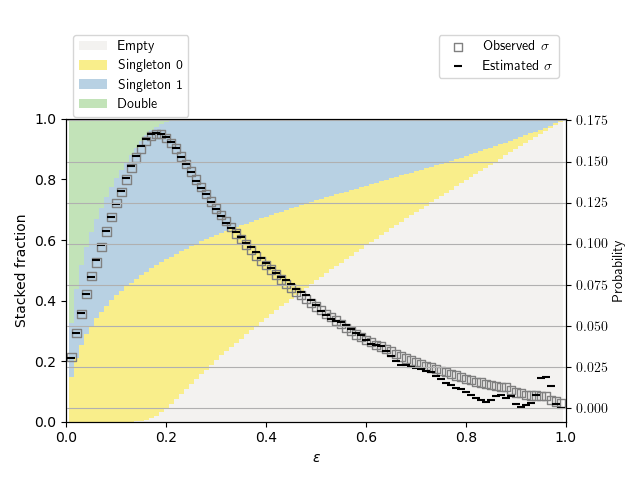}
\caption{This shows the fraction of prediction set sizes and $\sigma$ as a function of $\varepsilon$.}
\label{fig:sigma-icp-batch}
\end{subfigure}
\vskip\baselineskip 
\begin{subfigure}[b]{1.0\textwidth}
\centering
\includegraphics[width=0.8\linewidth]{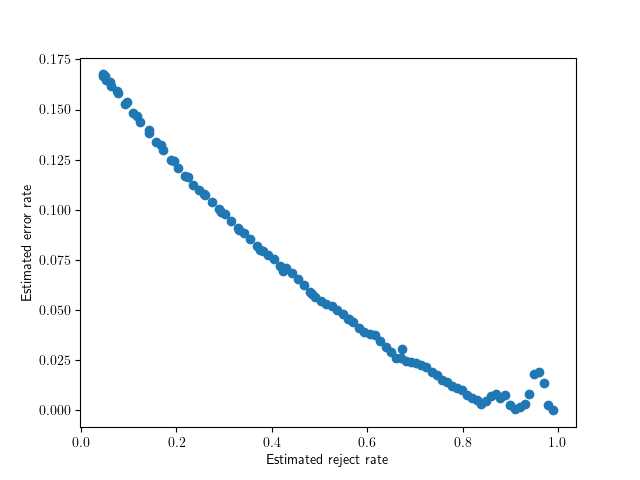}
\caption{This shows the error rate as a function of reject rate, parameterised by $\varepsilon$.}
\label{fig:reject-batch}
\end{subfigure}
\caption{Singleton error rates and error reject curve in the batch ICP case for \texttt{California-Housing-Classification}.}
\label{fig:batchICP CaliforniaHousing}
\end{figure}

\clearpage
\section{Conclusions}\label{section:Observations and Discussion}

We have formally presented a theoretical probability for the error of a singleton prediction, and given a way to approximate it in practice. The expression has appeared before in~\citet{linusson2016reliable} and~\citet{bortolussi2019conformal}. However, these previous works did not take into account the relevant adjustments needed to account for the inductive conformal predictor. Our main contribution is the proof of the singleton error rate, which translates to the error rate \textcolor{black}{and} the classifier with reject option. The formula in Proposition \ref{prop:exactSingletonErrorOnline} holds only in the online setting. The singleton error rates in other settings, can be found in \textcolor{black}{Appendix} \ref{section:More Details on Error Rate of Singletons}, and are proved analogously, but taking into account the overall error rate in the relevant setting, as described in \textcolor{black}{Appendix} \ref{section:More Details on Conformal Predictions}. With these results, models with reject option can be created in both the online and offline setting, with guaranteed distribution-free error rates.

In our numerical illustrations, our approximation of $\sigma$ is $(n\eps-e)/s$. The plots indicate that this approximation is noisy for large $\eps$, which likely is caused by a large number of traditional CP empty predictions in combination with a small number of accepted predictions. 

The theoretical error can only be known exactly if we know the true data-generating distribution, but the empirical distribution is in practice typically close to the true one, which leads to a useful contribution in practical applications. E.g. in a typical use case, one would predict many objects at once, using an inductive conformal predictor without knowing the true labels. Then $\hat{\sigma}(\eps) := (n\eps - e)/s$ is an unbiased estimator of the error rate for each $\eps\in(0,1)$. A user could easily compute $\hat{\sigma}(\eps)$ for any $\eps$ directly from the p-values, plotting the estimated error rates as in our figures to guide decisions about suitable reject rates.

The significance level, $\varepsilon$, can be used to set a reject threshold along the lines of~\citet{chow1970optimum}, since we can have a scenario where we only accept singleton predictions. However, the actual limits for the reject rate and the error rate will depend on the conformal predictor and the data set used, as is seen in our results. The error-reject rate curves can potentially be useful in practical applications when a user needs to define a tolerable error rate or reject rate. \textcolor{black}{Depending on the specific conformal predictor, not all reject rates are achievable. Consider e.g. Figure \ref{fig:reject-tcp}, where the lowest possible reject rate is approximately 0.6. Moreover, the same reject rate may lead to different error rates, which follows from two different significance levels leading to the same number of singleton predictions. In such a case, it is preferable to choose the significance level that yields the lowest error rate. In other cases, such as Figure \ref{fig:ICP spambase}, the error-reject curve shows that very low error rates are achievable, but perhaps at an unacceptably high reject rate. Depending on the use case, a rejected prediction may trigger different actions, which lead to different desiderata. If, say, rejected predictions lead to costly expert evaluation, and the consequences of errors are not catastrophic, the maximum tolerable reject rate may be quite low. On the other hand, if lower error rates are desirable regardless of the reject rate, the curves allow users to identify the lowest achievable error rate, and know what reject rate they have to accept to achieve it. In this work, we have only considered the case of binary classification. This is no fundamental restriction, as} multiclass problems (with more than two possible labels) can be handled by building several models in a one-against-all fashion.

\appendix

\section{More Details on Conformal Predictions}\label{section:More Details on Conformal Predictions}

Conformal prediction (CP), developed by Vovk, Gammerman and Shafer, is a method to generate prediction sets for essentially any predictive model, with a user specified confidence level $1-\eps$. These prediction sets are valid not just in the asymptotic sense that the error frequency in the long run is $\eps$, but also in the sense that for a smoothed CP, the error probability is exactly $\eps$, independently for each trial. For a full introduction to conformal predictors, see \citet{shafer2008tutorial} and\citet{alrw2}. Conformal predictors are a type of confidence predictors which we introduce for completeness, but first some basic assumptions.

Assume that Reality outputs a sequence of examples
\begin{equation}
    \label{eq:RealityOutputs}
    (x_1,y_2),(x_2,y_2),\dots
\end{equation}
where $x_i\in \boldsymbol{X}$ is called an object and $y_i\in \boldsymbol{Y}$ is its label. The measurable sets $\boldsymbol{X}$ and $\boldsymbol{Y}$ are then called the object space and the label space, respectively. For convenience, we may write $z_i:=(x_i,y_i)$ for an example, and consider it an element of the example space $\boldsymbol{Z}:=\boldsymbol{X}\times \boldsymbol{Y}$. For the entirety of this paper, we will be concerned with binary classification, that is $|\boldsymbol{Y}|=2$. The standard assumption in ML is that Reality chooses the examples independently from some probability distribution $Q$ on $\boldsymbol{Z}$; the examples are independent and identically distributed (i.i.d.). Most results for conformal predictors hold under the weaker assumption that the infinite data sequence \eqref{eq:RealityOutputs} is drawn from an exchangeable distribution $P$ on $\boldsymbol{Z}^{\infty}$. That $P$ is exchangeable means that for every positive integer $n$, every permutation $\pi$ of $\{1,\dots,n\}$, and every measurable set $E\subseteq Z^n$,
\begin{equation}
    \label{eq:ExchangeabilityDef}
    \begin{aligned}
        P\{(z_1,z_2,\dots)&\in \boldsymbol{Z}^{\infty}:(z_1,\dots,z_n)\in E\} \\
        &= P\{(z_1,z_2,\dots)\in \boldsymbol{Z}^{\infty}:(z_{\pi(1)},\dots,z_{\pi(n)})\in E\}.
    \end{aligned}
\end{equation}
In words, the distribution of the examples does not depend on their order. All results that we will state also hold in the finite-horizon setting, where Reality generates $N$ examples. The exchangeability assumption then simplifies to assuming that the distribution $P$ on $\boldsymbol{Z}^N$ has the property that for every permutation $\pi$ of $\{1,\dots,N\}$ and every measurable $E\subseteq \boldsymbol{Z}^N$
\begin{equation}
    \label{eq:ExchangeabilityFiniteDef}
    P(E)
    = P\{(z_1,\dots,z_N)\in \boldsymbol{Z}^N:(z_{\pi(1)},\dots,z_{\pi(N)})\in E\}.
\end{equation}
A confidence predictor $\Gamma$ is an algorithm that takes as input $x_1,y_1,\dots,x_{n-1},y_{n-1},x_n$ and a significance level $\eps$, and outputs a prediction set
$$
    \Gamma_n^{\eps} = \Gamma^{\eps}_n(x_1,y_1,\dots,x_{n-1},y_{n-1},x_n).
$$
We require also that the prediction sets shrink as $\eps$ increases, that is whenever $\eps_1\geq\eps_2$, $\Gamma^{\eps_1}_n \subseteq\Gamma^{\eps_2}_n$.

\subsection{Validity}
Let $\Gamma$ be a confidence predictor, processing the data sequence
\begin{equation}
    \label{eg:omega}
    \omega = (x_1,y_1,\dots)
\end{equation}
drawn from an exchangeable distribution where $x_i\in \boldsymbol{X}$. The event that $\Gamma$ makes an error on the $n$th trial, that is $y_n\not\in\Gamma_n^{\eps}$, can be represented by a number
\begin{equation}
    \label{eq:err}
    \err_n^{\eps}(\Gamma, \omega) :=
    \begin{cases}
        1 & \text{if $y_n\not\in \Gamma_n^{\eps}$} \\
        0 & \text{otherwise.}
    \end{cases}
\end{equation}
The total number of errors committed by $\Gamma$ after $n$ trials is then 
\begin{equation}
    \label{eq:Err}
\Err^{\epsilon}_n(\Gamma,\omega):=\sum^n_{i=1}\text{err}^{\epsilon}_i(\Gamma,\omega).
\end{equation}
The number $\err_n^{\eps}(\Gamma, \omega)$ is the realisation of a random variable $\err_n^{\eps}(\Gamma)$.
We say that a confidence predictor $\Gamma$ is exactly valid if, for each $\eps$, 
\begin{equation}
    \err_1^{\eps}(\Gamma), \err_2^{\eps}(\Gamma),\dots
\end{equation}
is a sequence of independent Bernoulli random variables with parameter $\eps$. In words, the event of making an error is like getting heads when tossing a biased coin where the probability of getting heads is always $\eps$.

\subsection{Full Conformal Prediction Online}

In the online setting, where Reality first outputs an object $x_n$, we produce $\Gamma^{\eps}_n$, and then immediately Reality outputs the true label $y_n$, which can then be used to produce $\Gamma^{\eps}_{n+1}$. While this procedure is both computationally intensive, and in practice not very useful; if we immediately see the true labels, predicting them beforehand makes little sense, it is still of theoretical interest, mainly because CP provides the strongest possible validity guarantees here. There are ways of taking CP offline and lighten the computational load dramatically, but the cost of this is that the validity suffers. In practice, however, the difference in validity is typically small.

\subsubsection{Nonconformity Measure}
A nonconformity measure is a way of scoring ho strange an example is, given all the previous examples we have seen. Formally, it is a measurable mapping
\begin{equation}
    A:\boldsymbol{Z}^{(*)}\times \boldsymbol{Z}\to\overline{\mathbb{R}}
\end{equation}
where $\boldsymbol{Z}^{(*)}$ is the set of all bags (or multisets) of elements of $\boldsymbol{Z}$. We write $\Lbag z_1,\dots,z_n\Rbag$ for the bag consisting of the elements $z_1,\dots,z_n$, some of which are allowed to be identical. A bag, in other words, is a set with repetition allowed. Given a sequence of old examples $z_1,\dots,z_n$, and a new example $\boldsymbol{Z}$, the number 
\begin{equation}
    A(\Lbag z_1,\dots,z_n\Rbag,z)
\end{equation}
is called the nonconformity score of $\boldsymbol{Z}$.

\subsubsection{p-values}
Given a nonconformity measure $A$ and a bag $\Lbag z_1,\dots,z_n\Rbag$, we can compute the nonconformity score
\begin{equation}
    \alpha_i := A(\Lbag z_1,\dots,z_n\Rbag,z_i)
\end{equation}
for each example $z_i$ in the bag. By itself, $\alpha_i$ does not tell us how unusual $z_i$ is with respect to the other elements in the bag. For this, we need to compare the nonconformity scores of all the examples. We do this by computing the fraction,
\begin{equation}
    \frac{|\{j=1,\dots,n:\alpha_j\geq\alpha_i\}|}{n},
\end{equation}
which we denote the p-value of $z_i$. If the fraction is small, then $z_i$ is very nonconforming (an outlier), and if it is large $z_i$ is very conforming. To get the strongest validity guarantees, we need to handle the borderline cases when $\alpha_j=\alpha_i$ more carefully. The smoothed p-value of $z_i$ is defined as
\begin{equation}
    \frac{|\{j=1,\dots,n:\alpha_j>\alpha_i\}| + \tau_n|\{j=1,\dots,n:\alpha_j=\alpha_i\}|}{n},
\end{equation}
for $\tau_n\in[0,1]$ uniformly distributed at random, and independent of everything else. The function that maps a sequence of examples to (smoothed) p-values in $[0,1]$ is called a (smoothed) conformal transducer;
\begin{equation}
    \label{eq:conformalTransducer}
    \begin{aligned}
        f(x_1,\tau_i,y_1,\dots&,x_{n-1},\tau_{n-1},y_{n-1}, x_n,\tau_n,y_n)\\
        &:= \frac{|\{j=1,\dots,n:\alpha_j>\alpha_i\}| + \tau_n|\{j=1,\dots,n:\alpha_j=\alpha_i\}|}{n}
    \end{aligned}
\end{equation}

\subsubsection{Conformal Predictors}

We are now in a position to define conformal predictors. For our purposes, we will be mainly interested in smoothed CPs, which utilise smoothed p-values. The smoothed conformal predictor determined by the nonconformity measure $A$ is the randomised confidence predictor, 
\begin{equation}
    \begin{aligned}
        \Gamma(x_1,\tau_i,y_1,&\dots,x_{n-1},\tau_{n-1},y_{n-1}, x_n,\tau_n) \\
        &= \{y\in \boldsymbol{Y} : \frac{|\{i=1,\dots,n:\alpha_i>\alpha_n\}| + \tau_n|\{i=1,\dots,n:\alpha_i=\alpha_n\}|}{n} > \eps\},
    \end{aligned}
\end{equation}
where
\begin{equation}
    \begin{aligned}
        \alpha_i &:= A(\bagname,(x_i,y_i), & i=1,\dots,n-1\\
        \alpha_n &:= A(\bagname,(x_n,y), &\\
        \bagname &= \Lbag (x_1,y_1),\dots,(x_{n-1},y_{n-1}), (x_n,y)\Rbag.
    \end{aligned}
\end{equation}
The main result regarding conformal predictors is the following proposition, whose proof can be found in \citep{alrw2}.
\begin{proposition}
    Any smoothed conformal predictor is exactly valid.
\end{proposition}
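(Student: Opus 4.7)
My plan is to reduce exact validity to the statement that the sequence of smoothed p-values of the true labels is i.i.d.\ uniform on $[0,1]$. Writing $p_n$ for the smoothed p-value \eqref{eq:conformalTransducer} evaluated at the true label $y_n$, the definition of $\Gamma$ gives $y_n \in \Gamma_n^{\eps}$ if and only if $p_n > \eps$. Hence $\err_n^{\eps}(\Gamma) = 1$ precisely when $p_n \leq \eps$, and showing that the $p_n$'s are i.i.d.\ uniform on $[0,1]$ immediately gives that the $\err_n^{\eps}$ are i.i.d.\ Bernoulli$(\eps)$, which is the definition of exact validity.

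I would first establish marginal uniformity. For fixed $n$, the exchangeability assumption \eqref{eq:ExchangeabilityDef} of $(z_1,\dots,z_n)$, combined with the fact that $A$ treats its first argument as a bag, forces the joint law of the nonconformity scores $(\alpha_1,\dots,\alpha_n)$ to be exchangeable. Therefore the rank of $\alpha_n$ among $\alpha_1, \dots, \alpha_n$ is uniform on $\{1,\dots,n\}$, and the independent tie-breaker $\tau_n$ in \eqref{eq:conformalTransducer} smooths this discrete rank into a genuine uniform random variable on $[0,1]$.

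The main obstacle is joint independence. My plan is to show by induction on $n$ that $(p_1,\dots,p_n)$ is uniform on $[0,1]^n$. The crucial step is to condition on the bag $\bagname_n := \Lbag z_1,\dots,z_n\Rbag$ together with the randomisations $\tau_1,\dots,\tau_n$. Under this conditioning, exchangeability says that the permutation which orders the examples of $\bagname_n$ into the observed tuple $(z_1,\dots,z_n)$ is uniform over the symmetric group $S_n$, while the $\tau_i$'s remain independent uniforms on $[0,1]$. For a fixed bag, the smoothed p-values are deterministic functions of this permutation and the $\tau_i$'s, and a direct combinatorial computation then shows that a uniformly random permutation applied to a fixed multiset of nonconformity scores, followed by independent tie-breaking, produces a tuple whose conditional law is Lebesgue measure on $[0,1]^n$. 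Integrating out $\bagname_n$ preserves this, completing the induction step and hence the proof.
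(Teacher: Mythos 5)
The paper does not prove this proposition itself; it cites Vovk, Gammerman and Shafer (Theorem 8.2 in \citealp{alrw2}), and your sketch is essentially that reference's argument: reduce exact validity to the statement that the smoothed p-values of the true labels are i.i.d.\ uniform on $[0,1]$, and prove joint uniformity by induction, conditioning on the bag $\bagname_n$ and using the fact that, given the bag, the ordering of its elements is a uniformly random permutation. The reduction in your first paragraph and the marginal-uniformity argument in your second are correct.

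The one place where your sketch is looser than the actual proof is the ``direct combinatorial computation'' in the induction step. For $i<n$ the p-value $p_i$ is computed from the sub-bag $\bagname_i=\Lbag z_1,\dots,z_i\Rbag$ with nonconformity scores \emph{recomputed} relative to that sub-bag, so $(p_1,\dots,p_{n-1})$ is not a function of a single fixed multiset of scores drawn from $\bagname_n$; the scores themselves change as the bag shrinks. The clean way to close this, and the way \citet{alrw2} organise it, is backwards: conditionally on $\bagname_n$, the pair $(z_n,\tau_n)$ determines $p_n$, which is uniform on $[0,1]$; conditionally on $\bagname_n$ and $z_n$ one knows $\bagname_{n-1}$, and the first $n-1$ examples are again a uniformly random ordering of $\bagname_{n-1}$; the inductive hypothesis then gives that $(p_1,\dots,p_{n-1})$ is uniform on $[0,1]^{n-1}$ conditionally on \emph{every} realisation of $\bagname_{n-1}$, and it is this ``same conditional law whatever the remaining bag turns out to be'' that yields independence from $p_n$. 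Your outline contains all the needed ingredients, but this recursive structure is where the actual work lies and should be made explicit rather than absorbed into a single combinatorial claim.
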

In other words, the prediction sets output by a smoothed CP will contain the true label with probability $1-\eps$ independently for each trial.

\subsection{Mondrian Conformal Predictors}

A Mondrian taxonomy is a function
\begin{equation}
    \label{eq:mondrianDef}
    \kappa:\mathbb{N}\times \boldsymbol{Z} \to \boldsymbol{K}
\end{equation}
where $\boldsymbol{K}$ is a measurable space of categories. $\kappa$ maps each pair $(n,x)$ to its category, and we require that the pre-image $\kappa^{-1}(k)$ of each category $k\in \boldsymbol{K}$ for a rectangle $A\times B$ for some $A\in\mathbb{N}$ and $B\in \boldsymbol{Z}$. Given a Mondrian taxonomy, we can define a Mondrian nonconformity measure
\begin{equation}
    \label{eq:mondrianNonconformity}
    A:K^*\times(Z^{(*)})^K\times Z\to\overline{\mathbb{R}},
\end{equation}
and the smoothed Mondrian conformal transducer (smoothed MCT) determined by $A$, which produces p-values
\begin{equation}
    \label{eq:smoothedMCT}
    \begin{aligned}
        p_n 
        &= f(x_1,\tau_1,y_1,\dots,x_n,\tau_n,y_n)\\
        &:= \frac{|\{i:\kappa_i=\kappa_n, \alpha_i>\alpha_n\}| + \tau_n|\{i:\kappa_i=\kappa_n, \alpha_i=\alpha_n\}|}{|\{\kappa_i=\kappa_n\}|}
    \end{aligned}
\end{equation}
where $i=1,\dots,n$, $\kappa_i:=\kappa(i,z_i)$, $z_i=(x_i,y_i)$ and
\begin{equation}
    \alpha_i:=A(\kappa_1,\dots,\kappa_n,(k\in \boldsymbol{K} \mapsto \Lbag z_j:j\in\{i,\dots,n\}, \kappa_j=k\Rbag)z_i)
\end{equation}
for $i=1,\dots,n$ such that $\kappa_i=\kappa_n$.

By Proposition 4.6 \citep{alrw2}, any smoothed MCT is category-wise exact with respect to $\kappa$, meaning that for all $n$, the conditional probability distribution of $p_n$ given $\kappa(1,z_1),p_1,\dots,\kappa(n-1,z_{n-1}),p_{n-1},\kappa(n,z_n)$ is uniform on $[0,1]$.

Given a (smoothed) MCT we can always produce a (smoothed) Mondrian conformal predictor (MCP)
\begin{equation}
    \label{eq:defMCT}
    \Gamma^{(\eps_k:k\in \boldsymbol{K})}(z_1,\dots,z_{n-1},x_n)
    := \{y\in \boldsymbol{Y}:f(x_1,\tau_1,y_1,\dots,x_n,\tau_n,y) > \eps_{\kappa(n,(x_n,y))}\}
\end{equation}
which is then category-wise exactly valid with respect to $\kappa$ in the sense that the error events for each category $k\in K$ are Bernoulli random variables with parameter $\eps_k$. 

\subsection{Inductive Conformal Predictors}\label{ICP theory}

Full conformal predictors are notoriously computationally heavy to implement. Inductive conformal predictors solve this practical issue. For details, we refer the reader to 4.2 in \citet{alrw2}.

\subsubsection*{Offline ICP}
In the offline mode, an ICP $\Gamma$ is obtained by splitting the training set of size $l$ into a proper training set of size $m$ and a calibration set of size $h:=l-m$. Then for each test object $x_i$, $i=l+1,\dots,l+k$, the prediction sets are
\begin{equation}
    \label{eq:offlineICP}
    \Gamma^{\eps}(z_1,\dots,z_l,x_i) := \bigg\{y\in \boldsymbol{Y}: \frac{|\{j=m+1,\dots,l:\alpha_j\geq\alpha_i\}|+1}{l-m+1} >\eps\bigg\}.
\end{equation}
where
\begin{equation}
    \label{eq:offlineAlpha}
    \begin{aligned}
        \alpha_j &:= A((z_1,\dots,z_m),z_j), & j=m+1,\dots,i-1\\
        \alpha_i &:= A((z_1,\dots,z_m),(x_i,y) &
    \end{aligned}
\end{equation}
Smoothed ICPs offline are defined as in the case of full conformal predictors, by introducing a random tie breaking rule for the nonconformity scores. For offline ICPs, it is still true that the probability of making an error is bounded by $\eps$, but the error events are unfortunately no longer independent. With a small modification, one can however ensure a modified notion of validity. After each test example $x_i$ is processed, we add its corresponding nonconformity score $\alpha_i$ to the pool of nonconformity scores used to generate the next prediction set. 

\subsubsection*{Semi-online ICP}
The semi-online ICP is thus
\begin{equation}
    \label{eq:semiOnlineCP}
    \Gamma^{\eps}(z_1,\dots,z_l,x_i) := \bigg\{y\in \boldsymbol{Y}: \frac{|\{j=m+1,\dots,i:\alpha_j\geq\alpha_i\}|}{i-m} >\eps\bigg\},
\end{equation}
where
\begin{equation}
    \label{eq:semiOfflineAlpha}
    \begin{aligned}
        \alpha_j &:= A((z_1,\dots,z_m),z_j), & j=m+1,\dots,l\\
        \alpha_i &:= A((z_1,\dots,z_m),(x_i,y) &
    \end{aligned}
\end{equation}
for an inductive nonconformity measure $A$ (see 4.4.2. in \citet{alrw2}). 

\subsubsection*{Validity of semi-online ICPs}

The semi-online ICP \eqref{eq:semiOnlineCP} is conservatively valid, and the corresponding smoothed semi-online ICP is exactly valid by proposition 4.1 \citep{alrw2}.

\subsubsection*{Validity of offline ICPs}

Generally, a confidence predictor is said to be $(\delta_n)$-conservative, where $\delta_1,\delta_2,\dots$ is a sequence of non-negative numbers, if for any exchangeable probability distribution $P$ on $\boldsymbol{Z}^{\infty}$ there exists a family 
\begin{equation}
    \label{eq:xiFamily}
    \xi_n^{(\eps)}, ~\eps\in(0,1), ~n=1,2,\dots
\end{equation}
of $\{0,1\}$-valued random variables on $(\boldsymbol{Z}^{\infty}\times[0,1], P\times \boldsymbol{U})$ such that
\begin{itemize}
    \item 
    for all $n$ and $\eps$, $\xi_n^{(\eps)}$ depends only on the first $n$ examples;
    \item 
    for fixed $\eps$, $\xi_1^{(\eps)}, \xi_2^{(\eps)},\dots$ is a sequence of independent Bernoulli random variables with parameter $\eps$:
    \item 
    for all $n$ and $\eps$, $\err_n^{(\eps-\delta_n)}(\Gamma)\leq \xi_n^{(\eps)}$.
\end{itemize}
Note that when $\delta_n=0$, $n=1,2,\dots$, this reduces to the definition of conservative validity. By proposition 4.3 \citep{alrw2}, the confidence predictor
\begin{equation}
    \label{eq:extendedOfflineICP}
    \Tilde{\Gamma}^{\eps}(z_1,\dots,z_{i-1},x_i) := 
    \begin{cases}
        \Gamma^{\eps}(z_1,\dots,z_l,x_i) & \text{if $i>l$}\\
        \boldsymbol{Y} & \text{otherwise},
    \end{cases}
\end{equation}
where $\Gamma^{\eps}(z_1,\dots,z_l,x_i)$ is defined by \eqref{eq:offlineICP}, is $(\delta_n)$-conservative where
\begin{equation}
    \label{eq:deltan}
    \delta_n :=
    \begin{cases}
        \frac{i-l}{l-m} & \text{if $i>l$} \\
        0 & \text{otherwise}.
    \end{cases}
\end{equation}

\subsection{Training-Conditional Validity of ICPs}

A set predictor $\Gamma$ is $(\varepsilon,\delta)$-valid for $\varepsilon,\delta\in(0,1)$ if, for any probability distribution $Q$ on $\boldsymbol{Z}$,
\begin{equation}
    \label{eq:defEpsilonDeltaValidity}
    Q^l\{(z,1,\dots,z_l):Q(\Gamma(z_1,\dots,z_l))\geq 1-\varepsilon\}\geq 1-\delta,
\end{equation}
where
\begin{equation}
    \label{gammaFunctionDef}
    \Gamma(z_1,\dots,z_l):=\{(x,y)\in \boldsymbol{Z}:y\in\Gamma(z_1,\dots,z_l,x)\}.
\end{equation}
This means essentially that with probability at least $1-\delta$, the probability of making an error is at most $\varepsilon$. 

Let $\Gamma$ be an offline ICP with training and proper training set sizes $l$ and $m$ respectively. By proposition 4.9 \citep{alrw2}, if we define
\begin{equation}
    \label{eq:epsilonPrime}
    \tilde{\eps} := \eps - \sqrt{\frac{\ln\frac{1}{\delta}}{2h}},
\end{equation}
where $h:=l-m$ is the size of the calibration set, the set predictor $\Gamma^{\tilde{\eps}}$ is $(\eps, \delta)$-valid.

\subsection{Batch Offline ICP}

In practical settings, one common scenario is that we have access to a training set of size $l$, and want to make predictions for, say, $N$ objects. We can do this using an offline ICP. After making the predictions, we may gather the true labels for our objects, and include the resulting examples in the training set, which can then be used to make $N$ more observation. We are in a batch update setting.

Formally, let $l_0:=l$ the training set size for batch 0. Then we choose a proper training set size $m_0=m$, and obtain a calibration set size $h_0:=l-m$. If we fix the batch size to be $N$, we see that the training set size for the batch $k$ is $l_k = l + kN$. Then we pick a proper training set size $m_k$ and the calibration set size is $h_k:=l_k - m_k$. In each batch, have that $\Gamma^{\tilde{\eps}}$ is $(\eps, \delta)$-valid, where $\tilde{\eps}$ is defined in \eqref{eq:epsilonPrime}. We are then faced with a choice about how to use the new $N$ examples obtained since the last batch. Keeping $\delta$ fixed, we can aim for decreasing $\tilde{\eps}$. Let $\tilde{\eps}_k$ be the $\tilde{\eps}$ required for $\Gamma^{\tilde{\eps}}$ to be $(\eps,\delta)$-valid in the $k$th batch. Then we see that 
\begin{equation}
    \label{eq:epsilonPrimeAsymptotic}
    \lim_{k\to\infty}\tilde{\eps}_k = \lim_{k\to\infty}\eps - \sqrt{\frac{\ln\frac{1}{\delta}}{2h}} = \eps,
\end{equation}
but the uncertainty in our validity is fixed at $(1-\delta)$. On the other hand, we could be happy with the correction in $\eps$ and instead aim to decrease the uncertainty. Letting $\delta_0:= \delta$ we may choose $\delta_k$ to be
\begin{equation}
    \label{eq:deltaUpdate}
    \delta_{k+1} = \delta_k^{q_{k+1}}
\end{equation}
where $q_{k+1}=\frac{h_{k+1}}{h_k}$, and keep $\tilde{\eps}$ fixed. To realise this, just solve the equation $\tilde{\eps}_{k+1} = \tilde{\eps}_k$ for $\delta_{k+1}$.
If we require $h_{k+1}>h_k$, since $\delta\in(0,1)$, we see that
\begin{equation}
    \label{eq:deltaLimit}
    \lim_{k\to\infty}\delta_k=0.
\end{equation}
Of course, we may be interested in decreasing both $\tilde{\eps}_k$ and $\delta_k$. Choosing $\delta_{k+1} < \delta_k^{q_{k+1}}$ accomplish this goal. Alternatively, we may choose to first decrease $\delta$ to some desired level, and only then start decreasing $\tilde{\eps}_k$ to ensure higher efficiency. Recall that a confidence predictor must satisfy the property of nested prediction sets (2.4) in \citep{alrw2}), so decreasing the significance level will increase the efficiency.

We may also choose to increase the proper training set size, which would also be expected to increase the efficiency of $\Gamma$. 

To summarise, in each batch $k$ we have two choices to make. The proper training set size, $m_k$ which will determine the calibration set size $h_k$, and the uncertainty parameter $\delta_k$ which will determine the batch significance level $\tilde{\eps}_k$. Of course, we do not need to keep the batch size fixed. If we let $N_0 = l$ we may choose batch sizes $N_1,N_2,\dots$ where $N_k\geq1$ for $k=1,2,\dots$, and everything above still holds.

\section{More Details on Error Rate of Singletons}\label{section:More Details on Error Rate of Singletons}

This section discusses the validity of singleton predictions in some other settings.

\subsection{Singleton Validity of Mondrian Conformal Predictors}

Recall that a smoothed MCP is category-wise exactly valid with respect to the Mondrian taxonomy $\kappa$. For the singleton errors, this means that in each category $k\in \boldsymbol{K}=\{0,1\}$, the probability of a singleton error is
\begin{equation}
    \label{eq:MondrianSingletonError}
    \sigma_k = \frac{\eps_k-\Prob(E_k)}{\Prob(S_k)}.
\end{equation}
\subsubsection*{Label-Conditional Validity}
In \citep{linusson2016reliable} it is shown that it is impossible to arrive at a label-conditional $\sigma$, since $\Prob(E_k)$ is unknown. They do however give a conservative estimate by assuming $\Prob(E_k)=0$. In practice, a better estimate $\Prob(E_k)$ could be to use the observed frequencies of $E_0$ and $E_1$ respectively as observed in the training set.

\subsubsection*{Object-Conditional Validity}
If, on the other hand, we want to use some other Mondrian taxonomy, the situation can be different. Consider for example the case when we try to predict whether a patient has Alzheimer's disease. It is known that about twice as many females have Alzheimer's disease than males, presumably due to the fact that females tend to live longer. This observation leads to at least two possible Mondrian taxonomies of interest, where the categories are determined by features of the objects, not the labels. We could either have just two categories, males and females, or we could have age related categories. In each case, crucially for us, we can observe the number of empty and singleton predictions we make. For each category $k$, we can observe $n_k$, the total number of predictions made for the category $k$, $e_k$ and $s_k$, the number of empty and singleton predictions we make for the category $k$. Then 
\begin{equation}
    \frac{n_k\eps_k-e_k}{s_k}
\end{equation}
is our estimate of the object-conditional singleton validity of our MCP, and it converges to $\sigma_k$ when $n_k\to\infty$ by the law of large numbers.

\subsection{Training-Conditional Singleton Validity ICPs}\label{sec:ICP singleton}

Suppose $\Gamma$ is an ICP. We know $\Gamma^{\tilde{\eps}}$ with $\tilde{\eps}$ defined in \eqref{eq:epsilonPrime} is $(\eps,\delta)$-valid. This means 
\begin{equation*}
    \Prob(\Prob(\err) < \eps)\geq1-\delta,
\end{equation*}
and we may use the law of total probability as before to deduce
\begin{equation}
    \Prob\bigg(\Prob(\err~|~S)<\frac{\tilde{\eps}-\Prob(E)}{\Prob(S)}\bigg)\geq1-\delta.
\end{equation}
Let us denote this upper bound on the singleton error probability by $\tilde{\sigma}$. In this setting, we typically use the same ICP to produce many prediction sets without retraining and/or recalibrating. Say we have a test set of size $n$, and wish to test the validity of our ICP on just the singleton predictions. We observe that we have $e$ empty predictions and $s$ singletons. Then
\begin{equation}
    \label{eq:sigmaPrimeLimit}
    \lim_{n\to\infty}\frac{n\tilde{\eps}-e}{s} = \tilde{\sigma}.
\end{equation}

\subsection{Batch Offline ICP}

We are, as always, interested in what can be said about the validity of the singleton predictions in the batch update scenario. In each batch we have an offline ICP, and we make $N$ predictions. Our approximation of $\sigma$ in \eqref{eq:sigmaPrimeLimit} still holds, but with $N$ in place of $n$. If $N$ is large, this approximation can be expected to be good. Explicitly,
\begin{equation}
    \label{eq:sigmaPrimeK}
    \tilde{\sigma}_k:=\frac{N_k\tilde{\eps}_k-e_k}{s_k}
\end{equation}
where $e_k$ and $s_k$ are the number of empty and singleton predictions respectively, is a good approximation of $\tilde{\sigma}$ if $N_k$ is sufficiently large. Moreover, if we use the batch updates to decrease $\tilde{\eps}_k$, we have that
\begin{equation}
    \label{eq:sigmaPrimeBatchLimit}
    \lim_{k\to\infty}\tilde{\sigma}_k = \sigma = \frac{\eps-\Prob(E)}{\Prob(S)},
\end{equation}
and if we have also chosen to decrease $\delta_k$, this is exactly the error probability of the singleton predictions in the limit. Looking closely at \eqref{eq:defEpsilonDeltaValidity} we see that this means that the probability of a singleton prediction being in error is at most $\sigma$ so that $\Gamma$ is conservatively valid in the limit.


\bibliographystyle{apacite}

\bibliography{main}



\end{document}